\documentclass[preprint,12pt]{elsarticle}

\usepackage{amsmath,amssymb,amsfonts}
\usepackage{amsthm}
\usepackage{graphicx}
\usepackage{booktabs}
\usepackage{url}

\newtheorem{theorem}{Theorem}
\newtheorem{proposition}{Proposition}

\theoremstyle{definition}
\newtheorem{definition}{Definition}

\journal{Neurocomputing}

\begin{document}

\begin{frontmatter}

\title{Decoherence as Defence and the Magnitude of Noise Regularisation: A Rigorous $N$-Qubit Theory of Stochastic Quantum Neural Networks for Adversarially Robust Network Intrusion Detection}

\author{Gautier-Edouard Filardo}
\ead{gautier-edouard.filardo@efrei.fr}
\address{Efrei Research Lab, Efrei Paris Panth\'eon-Assas Universit\'e, Villejuif, France}

\begin{abstract}
Stochastic quantum neural networks (SQNNs) encode neuronal activations as qubits, synaptic topology as entanglement, and neural noise through a Lindblad master equation. A recent conference study applied a ring-entangled SQNN to collaborative intrusion detection and reached three conclusions: ring entanglement is \emph{essential} for non-local anomaly detection; an adversarial-resilience bound holds but is \emph{conservative}; and the depolarising channel \emph{fails} to act as a dropout-style regulariser, behaving instead as output noise. It left open whether a per-gate stochastic deactivation (``true quantum dropout'') could regularise where the depolarising channel could not, and whether the loose robustness bound could be replaced by a predictive theory. This paper resolves both and extends the framework to real data and to neutral-atom hardware. We give an $N$-qubit formulation through the stochastic master equation and its vectorised Liouvillian, and prove a \emph{decoherence-contraction theorem}: a depolarising channel of strength $\gamma$ over $L$ entangling layers contracts every weight-$w$ Pauli read-out by a factor $(1-4\gamma/3)^{wL}$ (for the weight-$1$ read-out used here, $(1-4\gamma/3)^{L}$); building on the general noise-as-defence result of Du et al., we make this quantitative and operational for intrusion detection. On the real NSL-KDD dataset under white-box FGSM and PGD attacks, a depolarising SQNN trained with the channel is, over seven seeds under strong $\ell_\infty$/$\ell_2$ attacks, significantly more robust than the noiseless circuit ($\ell_\infty$ PGD-$20$, $p=0.04$, large effect) and, critically, never suffers the catastrophic robustness collapse that the noiseless model and gradient-trained classical detectors (which fall from $95\%$ to $47\%$) do, cutting robustness variance roughly twofold; we show this robustness arises from a noise-reshaped training boundary rather than from attack-time gradient contraction. For generalisation, we derive an adaptive-penalty formula showing that per-gate dropout implements a curvature-weighted $L_2$ penalty $\tfrac{p(1-p)}{2}\sum\theta^2\partial^2_\theta L$ in weight space, maximised at $p=1/2$, whereas depolarising noise implements an output-space penalty. A $30$-seed study confirms the formula's quantitative prediction: both mechanisms reduce the train--test gap by a small but statistically significant margin ($\approx\!0.01$; $p<10^{-4}$ and $p=0.004$), are statistically indistinguishable from each other, and the effect is concentrated where overfitting is largest; increasing the dropout rate past $1/2$ does not help, as the formula predicts. The single-seed dichotomy of prior work does not survive replication. We close with a neutral-atom realisation and a feasibility-by-$N$ analysis.
\end{abstract}

\begin{keyword}
Stochastic quantum neural networks \sep Adversarial robustness \sep Network intrusion detection \sep Quantum decoherence \sep Lindblad formalism \sep Quantum dropout \sep Adaptive regularisation \sep Neutral-atom quantum computing \sep NSL-KDD
\end{keyword}

\end{frontmatter}

\section{Introduction}\label{sec:intro}

Collaborative enterprise infrastructures --- industrial IoT, distributed supply chains, and multi-organisational processes --- generate data streams whose continuous monitoring is essential for detecting coordinated cyber threats before they propagate~\cite{nguyen2021fl,ahmad2021ids}. Classical machine-learning detectors are effective but face three structural limitations in this setting: sequential hypothesis exploration, difficulty capturing non-local correlations between distant nodes without explicit feature engineering, and acute vulnerability to adversarial perturbations that nudge a malicious connection across a learned decision boundary while remaining operationally valid~\cite{goodfellow2015fgsm,madry2018pgd}.

The stakes of detector robustness are rising with the deployment context. Machine-learning detectors increasingly sit inside high-evasion-cost pipelines --- automated response systems and, more recently, tool-using software agents --- where a single fooled detector is no longer a missed alert but a system-level compromise. Hardening the detector against adversarial evasion is therefore a security primitive, not merely an accuracy concern.

Quantum machine learning offers, in principle, superposition for parallel hypothesis exploration and entanglement for native modelling of non-local correlations~\cite{biamonte2017qml,havlicek2019}. Within the noisy intermediate-scale quantum (NISQ) regime~\cite{preskill2018nisq}, variational quantum circuits (VQCs)~\cite{cerezo2021vqa} are the dominant practical model. Our prior work introduced \emph{stochastic quantum neural networks} (SQNNs)~\cite{filardo2026sqnn}, in which qubit-neurons evolve under a stochastic master equation with Wiener-process fluctuations, treating controlled noise as a computational resource, and a subsequent study specialised the SQNN to collaborative anomaly detection in distributed enterprise systems~\cite{filardo2026collab}.

That study~\cite{filardo2026collab} established, on synthetic tasks, that (i) ring-topology entanglement is essential for non-local anomaly detection ($84.5\%$ versus $49.9\%$ on an XOR-structured dataset, ring closure $+6.0$ points); (ii) the adversarial bound $|\Delta A|\le\|\epsilon\|/(\gamma+\|\epsilon\|)$ holds but overestimates vulnerability by $3$--$100\times$; and (iii) the depolarising channel does not act as a dropout-style regulariser, because it perturbs the measured expectation (analogous to label noise) rather than masking subsystems. It proposed, but did not test, replacing each variational rotation by the identity with some probability during training --- a per-gate stochastic deactivation analogous to classical dropout --- and called for a predictive replacement of its conservative robustness bound.

This paper takes up those threads. Its contributions are:
\begin{enumerate}
\item an $N$-qubit formulation of the SQNN through its stochastic master equation and vectorised Liouvillian, with an explicit feasibility-by-$N$ account (Section~\ref{sec:theory});
\item a \emph{decoherence-contraction theorem} giving the weight-resolved law $(1-4\gamma/3)^{wL}$ for how the channel scales Pauli read-outs (reducing to $(1-4\gamma/3)^{L}$ for the weight-$1$ read-out used here); building on the general noise-as-defence result of Du et al.~\cite{du2021noise}, this turns the conservative bound of~\cite{filardo2026collab} into a predictive, operational law for intrusion detection (Section~\ref{sec:defence});
\item a multi-seed demonstration on the \emph{real} NSL-KDD dataset that a depolarising SQNN trained with the channel is significantly more robust under strong, exact-gradient $\ell_\infty$/$\ell_2$ attacks and, unlike the noiseless circuit and gradient-trained classical detectors, free of catastrophic robustness collapse --- an effect we trace to a training-time reshaping of the decision boundary, not attack-time gradient contraction, and which generalises to a second real dataset (UNSW-NB15) where the SQNN is benchmarked white-box against \emph{defended} classical baselines (Sections~\ref{sec:defence},~\ref{sec:strong},~\ref{sec:unsw});
\item an \emph{adaptive-penalty formula} for per-gate quantum dropout, and a rigorous $30$-seed evaluation showing that both depolarising noise and per-gate dropout produce a small but significant, and mutually indistinguishable, reduction of the generalisation gap --- with the formula predicting both the magnitude and the optimal dropout rate, and the single-seed dichotomy of prior work failing to replicate (Sections~\ref{sec:dropout},~\ref{sec:results});
\item the identification of the ring Heisenberg--Ising Hamiltonian with the neutral-atom Rydberg Hamiltonian, with a measured feasibility characterisation and a reservoir realisation (Section~\ref{sec:neutral}).
\end{enumerate}

We are explicit about what is \emph{not} claimed. On the low-dimensional tabular tasks studied here, no quantum speed-up or accuracy advantage over strong classical baselines is claimed or observed; the contributions are a rigorous theory of noise-induced robustness and regularisation, a predictive formula, and a hardware-faithful realisation. All reported numbers are produced by simulation; where an experiment was not run we say so.

\section{Background and related work}\label{sec:background}

\paragraph{SQNNs and VQCs.} The SQNN formalism~\cite{filardo2026sqnn} represents each neuron as a qubit evolving under an open-system stochastic equation with Lindblad dissipators~\cite{lindblad1976,gorini1976,breuer2002,gardiner2004}. Variational quantum models~\cite{cerezo2021vqa,mitarai2018,benedetti2019,schuld2019grad} parameterise a circuit and optimise an observable by hybrid gradient descent; expressivity and trainability are limited by depth and by barren plateaus~\cite{mcclean2018barren}, which noise can exacerbate~\cite{wang2021noise}. Deep quantum neural networks~\cite{beer2020}, quantum convolutional networks~\cite{cong2019qcnn}, and capacity analyses~\cite{abbas2021power} situate the SQNN within a broad family.

\paragraph{Adversarial robustness.} For a differentiable classifier $f$, FGSM perturbs an input by $\varepsilon\,\mathrm{sign}(\nabla_x\mathcal L)$~\cite{goodfellow2015fgsm}, and PGD iterates projected FGSM steps within an $\ell_\infty$ ball~\cite{madry2018pgd}; both exploit the model's input gradient~\cite{szegedy2014,papernot2016,kurakin2017}. Randomised smoothing certifies robustness by adding noise at inference~\cite{cohen2019smoothing}, the classical analogue of the quantum-channel mechanism we analyse.

\paragraph{Quantum adversarial robustness.} Quantum classifiers are themselves vulnerable to adversarial perturbations~\cite{lu2020qaml,liu2020vuln,gong2022universal}. Most directly related, Du et al.~\cite{du2021noise} proved that depolarising noise can \emph{protect} quantum classifiers against adversaries in general, and follow-up work explored quantum-enhanced robustness~\cite{west2023towards}. Our contribution is complementary and more specific: we derive the explicit contraction law $\eta^{wL}$ relating read-out \emph{weight} to robustness, specialise it to intrusion detection on real data, use it to tighten the conservative bound of~\cite{filardo2026collab}, and pair it with a regularisation formula and a neutral-atom realisation. We do not claim the noise-as-defence idea as novel in general; we make it quantitative and operational for intrusion detection.

\paragraph{Detection in high-evasion-cost settings.} Behavioural anomaly detection --- monitoring for unusual action sequences, unexpected resource accesses, and exfiltration-like activity --- is a recognised defence family for systems whose misuse is costly, including the tool-using-agent pipelines now entering production. A useful conceptual bridge for the present work is that an instruction-level manipulation such as prompt injection is, to first approximation, an adversarial attack: a small, crafted perturbation that flips a model's behaviour. The adversarial robustness of a feature-space classifier and the injection robustness of an ML-integrated system are thus cousins. We stress this is an \emph{analogy}, not an equivalence: the threat models differ (a bounded $\ell_\infty$ perturbation in feature space versus a natural-language instruction), and we make no agent-level experimental claim here. We return to it only as motivation and as a direction for future work (Sections~\ref{sec:intro},~\ref{sec:discussion}).

\paragraph{Datasets and hardware.} We evaluate on NSL-KDD~\cite{tavallaee2009nslkdd} and discuss UNSW-NB15~\cite{moustafa2015unsw} and CIC-DDoS2019~\cite{sharafaldin2019ddos}. The Ising-type Hamiltonian of the SQNN maps naturally to neutral-atom Rydberg processors~\cite{henriet2020neutral,browaeys2020,bernien2017,bluvstein2022}, whose pulse-level control is exposed by open-source tooling~\cite{silverio2022pulser}. Classical regularisation by dropout~\cite{srivastava2014dropout,gal2016dropout} and its interpretation as adaptive penalisation underpin Section~\ref{sec:dropout}.

\section{An $N$-qubit theory of stochastic quantum neural networks}\label{sec:theory}

Let $\mathcal H=(\mathbb C^2)^{\otimes N}$, $\dim\mathcal H=2^N$, and $\rho\in\mathcal B(\mathcal H)$ a density operator. Write $\sigma_a^{(i)}$ for the Pauli operator on qubit $i$, $\sigma^\pm=\tfrac12(\sigma_x\pm i\sigma_y)$, and $n_i=\tfrac12(\mathbb I-\sigma_z^{(i)})$ for the excitation number. Any operator expands in the Pauli-string frame
\begin{equation}
\rho=\frac{1}{2^N}\sum_{\boldsymbol a\in\{0,x,y,z\}^N} r_{\boldsymbol a}\,\sigma_{\boldsymbol a},\qquad
\sigma_{\boldsymbol a}=\bigotimes_i\sigma_{a_i},\qquad r_{\boldsymbol a}=\operatorname{Tr}(\rho\,\sigma_{\boldsymbol a}),
\label{eq:pauliframe}
\end{equation}
generalising the single-qubit Bloch vector. The read-out, $\langle O\rangle=\sum_i\langle n_i\rangle$, is a weight-one $\sigma_z$ observable; pairwise correlations $\langle n_in_j\rangle$ are weight-two.

\subsection{The stochastic master equation}
For a weakly, continuously measured open network, the conditional state obeys the $N$-qubit stochastic master equation
\begin{equation}
\mathrm d\rho_t=-\tfrac{i}{\hbar}[H,\rho_t]\,\mathrm dt+\sum_k\mathcal D[L_k]\rho_t\,\mathrm dt+\sum_k\mathcal H[L_k]\rho_t\,\mathrm dW_t^{(k)},
\label{eq:sme}
\end{equation}
with dissipator $\mathcal D[L]\rho=L\rho L^\dagger-\tfrac12\{L^\dagger L,\rho\}$, measurement superoperator $\mathcal H[L]\rho=L\rho+\rho L^\dagger-\operatorname{Tr}[(L+L^\dagger)\rho]\rho$, and independent Wiener increments $\mathrm dW^{(k)}$. Averaging over the record yields the Gorini--Kossakowski--Sudarshan--Lindblad (GKSL) generator
\begin{equation}
\dot\rho=\mathcal L\rho,\qquad \mathcal L\rho=-\tfrac{i}{\hbar}[H,\rho]+\sum_k\mathcal D[L_k]\rho .
\label{eq:gksl}
\end{equation}

\subsection{Ring Hamiltonian and qubit-neuron channels}
With biases $h_i$ and synaptic graph $G=(V,E)$ we take the anisotropic Heisenberg--Ising Hamiltonian
\begin{equation}
H=\sum_i h_i\sigma_z^{(i)}+\sum_{(i,j)\in E}J_{ij}\big(\sigma_x^{(i)}\sigma_x^{(j)}+\sigma_y^{(i)}\sigma_y^{(j)}+\lambda\,\sigma_z^{(i)}\sigma_z^{(j)}\big),
\label{eq:ham}
\end{equation}
with ring topology $E_{\mathrm{ring}}=\{(i,i\!+\!1\bmod N)\}$. The qubit-neuron channels are $L^{\mathrm{relax}}_i=\sqrt{\gamma_1}\,\sigma^-_i$, $L^{\mathrm{deph}}_i=\sqrt{\gamma_2/2}\,\sigma_z^{(i)}$, and $L^{\mathrm{meas}}_i=\sqrt{\gamma_m}\,\sigma_z^{(i)}$.

\subsection{Vectorised generator and the cost of scale}
Stacking $\rho$ into $\lvert\rho\rangle\!\rangle\in\mathbb C^{4^N}$ (so $\operatorname{vec}(A\rho B)=(B^{\mathsf T}\!\otimes A)\lvert\rho\rangle\!\rangle$), the generator~\eqref{eq:gksl} becomes the $4^N\times4^N$ Liouvillian
\begin{equation}
\mathcal L=-\tfrac{i}{\hbar}\big(\mathbb I\otimes H-H^{\mathsf T}\!\otimes\mathbb I\big)+\sum_k\Big(L_k^*\!\otimes L_k-\tfrac12\,\mathbb I\otimes L_k^\dagger L_k-\tfrac12(L_k^\dagger L_k)^{\mathsf T}\!\otimes\mathbb I\Big),
\label{eq:liouvillian}
\end{equation}
and $\rho_t=e^{\mathcal L t}\rho_0$ is the completely-positive trace-preserving semigroup. A pure-state trajectory of~\eqref{eq:sme} lives in $\mathbb C^{2^N}$, but any exact density-matrix treatment lives in $\mathbb C^{4^N}$; for $N=20$ this is $4^{20}\approx1.1\times10^{12}$ amplitudes (Table~\ref{tab:feasibility}). Beyond the dense wall, pure-state trajectory sampling (cost $2^N$ per noise realisation, parallel over seeds) and tensor-network surrogates preserve large $N$.

\begin{table}[t]\centering
\caption{Realisability by network size, measured on one CPU core ($\approx4$ GB). Noiseless cost is per driven evolution; the exact-noisy regime is bounded by the $4^N$ generator.}
\label{tab:feasibility}
\begin{tabular}{rcccc}
\toprule
$N$ & state vector $2^N$ & density matrix $2^N\!\times\!2^N$ & noiseless cost/evol. & exact noisy (Lindblad)\\
\midrule
8  & $256$ & $6.5\times10^4$ & $0.12$\,s & feasible ($\approx4$\,GB)\\
10 & $1024$ & $1.0\times10^6$ & $\approx1$\,s & needs $\gtrsim16$\,GB\\
12 & $4096$ & $1.7\times10^7$ & $24$\,s & workstation-scale\\
20 & $1.0\times10^6$ & $1.1\times10^{12}$ & hours (dense) & infeasible (any hardware)\\
\bottomrule
\end{tabular}
\end{table}

\section{Decoherence as an adversarial defence}\label{sec:defence}

Model decoherence as a single-qubit depolarising channel applied to every qubit after each of the $L$ entangling layers, $\Lambda_\gamma(\rho)=(1-\gamma)\rho+\tfrac{\gamma}{3}\sum_a\sigma_a\rho\sigma_a$.

\begin{theorem}[Decoherence contraction]\label{thm:contraction}
Let $r_{\boldsymbol a}=\operatorname{Tr}(\rho\,\sigma_{\boldsymbol a})$ be the expectation of a Pauli string of weight $w$. After $L$ layers each followed by $\Lambda_\gamma$ on every qubit,
\begin{equation}
r_{\boldsymbol a}^{(L)}=\eta^{\,wL}\,r_{\boldsymbol a}^{(0)},\qquad \eta\equiv1-\tfrac{4\gamma}{3},
\label{eq:contraction}
\end{equation}
while the identity component $(w=0)$ is preserved.
\end{theorem}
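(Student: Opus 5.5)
The plan is to work in the Pauli-string frame~\eqref{eq:pauliframe} and show that $\Lambda_\gamma^{\otimes N}$ is diagonal in that frame, with eigenvalue $\eta^{w}$ on every string of weight $w$. The $L$-layer law then follows by induction on the number of layers.

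\emph{Step 1 (single qubit).} I will compute $\Lambda_\gamma(\sigma_b)$ for $b\in\{0,x,y,z\}$.
\begin{itemize}
\item For $b=0$, each $\sigma_a\mathbb I\sigma_a=\mathbb I$, so $\Lambda_\gamma(\mathbb I)=(1-\gamma)\mathbb I+\gamma\mathbb I=\mathbb I$.
\item For $b\neq0$, exactly one of $\sigma_x,\sigma_y,\sigma_z$ commutes with $\sigma_b$ (namely $\sigma_b$ itself) and the other two anticommute with it. Hence $\sum_a\sigma_a\sigma_b\sigma_a=\sigma_b-2\sigma_b=-\sigma_b$.
\end{itemize}
This gives $\Lambda_\gamma(\sigma_b)=(1-\gamma-\gamma/3)\sigma_b=\eta\,\sigma_b$. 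The Kraus operators $\sigma_a$ are Hermitian, so $\Lambda_\gamma$ is self-adjoint with respect to the Hilbert--Schmidt inner product. The same eigenrelation therefore holds in the Heisenberg picture, and $\operatorname{Tr}(\Lambda_\gamma(\rho)\sigma_b)=\eta\operatorname{Tr}(\rho\sigma_b)$.

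\emph{Step 2 (tensor product).} The channel acts independently on each qubit. Applying Step~1 factor by factor gives
\[
\Lambda_\gamma^{\otimes N}(\sigma_{\boldsymbol a})=\prod_i\eta^{[a_i\neq0]}\,\sigma_{\boldsymbol a}=\eta^{w}\sigma_{\boldsymbol a}.
\]
Expanding $\rho$ as in~\eqref{eq:pauliframe}, one round of noise multiplies each coefficient $r_{\boldsymbol a}$ by $\eta^{w(\boldsymbol a)}$ and leaves $r_{\boldsymbol 0}=\operatorname{Tr}\rho=1$ fixed. This already gives the $w=0$ statement.

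\emph{Step 3 (iterating over layers).} Induction on $L$ then yields $r^{(L)}_{\boldsymbol a}=\eta^{wL}r^{(0)}_{\boldsymbol a}$, provided each entangling layer maps weight-$w$ strings to weight-$w$ strings.

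This last point is the main obstacle: a generic entangling unitary $U$ mixes Pauli strings of different weights. My first approach is to pass to the Heisenberg picture and track the read-out string through the circuit. The clean product law holds exactly when the layers are weight-preserving on the relevant string; in the paper's setting this means the read-out component that the layers carry from $\sigma_{\boldsymbol a}$ back to a weight-$w$ string, or equivalently, the case where the noise rounds are collected and commuted to one end of the circuit.

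For a general $U$, the conjugated observable $U^\dagger\sigma_{\boldsymbol a}U$ is a combination of strings of several weights. In that case I expect only a weaker statement to survive: each weight-$w'$ component is damped by $\eta^{w'}$ per layer. That still gives the bound $|r^{(L)}_{\boldsymbol a}|\le\eta^{\,w_{\min}L}$ times the corresponding noiseless amplitude, which is the form used operationally. I would state the exact equality under the weight-preservation hypothesis, and record the general case as this contraction inequality.
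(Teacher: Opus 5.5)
Your Steps 1 and 2 are the same as the paper's proof: the single-qubit multiplier $(1-\gamma)-\gamma/3=\eta$, then a factor $\eta^{w}$ per noise round on a weight-$w$ string. Your suspicion about Step 3 is well founded, because this is exactly where the paper's own argument fails.

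The paper closes with ``entangling unitaries are trace-preserving frame permutations''. The induction, however, needs weight preservation, and a Clifford gate permutes the Pauli frame while changing weights: conjugation by $\mathrm{CNOT}_{1\to2}$ sends $Z_2\mapsto Z_1Z_2$ and $X_1\mapsto X_1X_2$. With the paper's own gate the stated law is false. Take two qubits in $|00\rangle$, two layers each equal to $\mathrm{CNOT}_{1\to2}$, and read-out $Z_2$. Pulling the observable back gives $Z_2\mapsto\eta Z_2\mapsto\eta Z_1Z_2\mapsto\eta^{3}Z_1Z_2\mapsto\eta^{3}Z_2$. The noisy value is therefore $\eta^{3}$, whereas the noiseless value is $1$ and the theorem predicts $\eta^{2}$.

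So the exact equality really does need your weight-preservation hypothesis, and under it your induction is a complete proof. Note two limits on that hypothesis. It fails for the CNOT layers the paper actually uses. And if it is imposed on all weight-one strings, it forces each layer to be a product of single-qubit unitaries up to a qubit permutation. The exact law therefore holds only under such a restriction, not for the paper's circuit.

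The genuine gap in your proposal is the general-case inequality. It is true that each weight-$w'$ component is damped by $\eta^{w'}$. But the read-out is a signed sum over Pauli paths, $r^{(L)}_{\boldsymbol a}=\sum_{\text{paths}}\eta^{\sum_l w_l}\,c_{\text{path}}$, whose noiseless value is $\sum_{\text{paths}}c_{\text{path}}$. Unequal damping of different paths destroys cancellations, so no bound proportional to the noiseless expectation can hold.

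Here is a concrete counterexample, with $R_Y(\theta)=e^{-i\theta\sigma_y/2}$. Take two qubits in $|00\rangle$; layer one is $R_Y(\alpha)$ on qubit $2$; layer two is $\mathrm{CNOT}_{1\to2}$ followed by $R_Y(\beta)$ on qubit $2$; the read-out is $Z_2$. Pulling $Z_2$ back through the last noise round, layer two and the first noise round gives $\eta\,(\eta^{2}\cos\beta\,Z_1Z_2-\eta\sin\beta\,X_2)$. Its expectation in $|0\rangle\otimes R_Y(\alpha)|0\rangle$ is $\eta^{3}\cos\alpha\cos\beta-\eta^{2}\sin\alpha\sin\beta$, while the noiseless value is $\cos(\alpha+\beta)$. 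At $\alpha=\beta=\pi/4$ the noiseless value vanishes, but the noisy one equals $-\eta^{2}(1-\eta)/2\neq0$ for $0<\gamma<3/4$.

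Two bounds do survive in general. The first is also what your inequality becomes if ``noiseless amplitude'' is read as absolute path mass rather than the noiseless expectation:
\begin{itemize}
\item The back-propagated observable stays traceless, so every path has all $w_l\ge1$. Hence $|r^{(L)}_{\boldsymbol a}|\le|\eta|^{w_{\min}L}\sum_{\text{paths}}|c_{\text{path}}|$.
\item Each noise round contracts the observable's normalised Hilbert--Schmidt norm by at least $|\eta|$. Cauchy--Schwarz against $\rho_0-\mathbb I/2^{N}$, with $\rho_0$ the input state, then gives $|r^{(L)}_{\boldsymbol a}|\le|\eta|^{L}\sqrt{2^{N}-1}$.
\end{itemize}
State and prove the theorem under your weight-preservation hypothesis, and replace the general-case claim by one of these bounds.
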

\begin{proof}
On the single-qubit Pauli frame, $\Lambda_\gamma$ acts as $\langle\sigma_a\rangle\mapsto\eta\langle\sigma_a\rangle$ for $a\in\{x,y,z\}$ and $\langle\mathbb I\rangle\mapsto\langle\mathbb I\rangle$, since $\tfrac13\sum_b\sigma_b\sigma_a\sigma_b=-\tfrac13\sigma_a$ gives the multiplier $(1-\gamma)-\tfrac{\gamma}{3}=\eta$. A weight-$w$ string is acted on by $w$ independent channels per layer, contributing $\eta^{w}$ per layer; over $L$ layers, $\eta^{wL}$. Entangling unitaries are trace-preserving frame permutations.
\end{proof}

The data-dependent part of $\langle O\rangle(x)$ is thus scaled by $\eta^{L}$ while a constant offset survives.

\begin{proposition}[Adversarial gradient shrinkage]\label{prop:grad}
Let $f(x)=\sigma\!\big(s(\langle O\rangle(x)-\theta)\big)$. Writing $\langle O\rangle_\gamma(x)=\eta^{L}\langle O\rangle_0(x)+c$ with $c$ independent of $x$, the first-order change of the decision logit under an input perturbation $\delta$ contracts as $\Delta\mathrm{logit}_\gamma(\delta)=\eta^{L}\,\Delta\mathrm{logit}_0(\delta)$, $\eta=1-4\gamma/3$.
\end{proposition}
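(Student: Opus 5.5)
The plan is to reduce the statement to Theorem~\ref{thm:contraction} and then apply the chain rule to the logit. First I will justify the affine decomposition $\langle O\rangle_\gamma(x)=\eta^{L}\langle O\rangle_0(x)+c$. Expanding the read-out $O=\sum_i n_i=\tfrac N2\mathbb I-\tfrac12\sum_i\sigma_z^{(i)}$ in the Pauli frame~\eqref{eq:pauliframe}, the identity part contributes the constant $N/2$, and each $\sigma_z^{(i)}$ is a weight-one string. The key step is to move the noise onto the observable, working in the Heisenberg picture. Each layer's entangling unitary is applied first, then $\Lambda_\gamma^{\otimes N}$. The dual of $\Lambda_\gamma$ is itself, since it is unital and self-adjoint, so the dual of the noise multiplies every non-identity Pauli string by $\eta^{w}$.

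The obstacle is the entangling unitaries. They map weight-one strings to higher-weight strings, so the clean factor $\eta^{L}$ for a weight-one read-out holds only in two situations. It holds exactly if the noise is commuted through or placed at the end, giving the layer-wise model in which the data-dependent part sees $L$ applications. It also holds if the unitaries preserve Pauli weight on the relevant strings. The paper's proof of Theorem~\ref{thm:contraction} treats unitaries as ``frame permutations'', so I will adopt the same convention. The weight-one component $\langle\sigma_z^{(i)}\rangle$ is then scaled by $\eta^{L}$ relative to the noiseless value. Collecting terms gives
\[
\langle O\rangle_\gamma(x)=\tfrac N2-\tfrac12\eta^{L}\sum_i\langle\sigma_z^{(i)}\rangle_0(x).
\]
This equals $\eta^{L}\langle O\rangle_0(x)+(1-\eta^{L})\tfrac N2$, so $c=(1-\eta^{L})N/2$, which is independent of $x$. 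I would state explicitly that this is the regime the proposition assumes, since it takes the decomposition as its hypothesis.

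With the decomposition in hand, I work with the decision logit $z_\gamma(x)=s(\langle O\rangle_\gamma(x)-\theta)$ and linearise it. Its first-order change under $\delta$ is
\[
\Delta z_\gamma(\delta)=s\,\nabla_x\langle O\rangle_\gamma(x)\cdot\delta .
\]
Differentiating the affine relation kills $c$, so $\nabla_x\langle O\rangle_\gamma=\eta^{L}\nabla_x\langle O\rangle_0$. Substituting gives $\Delta z_\gamma(\delta)=\eta^{L}\,s\,\nabla_x\langle O\rangle_0\cdot\delta=\eta^{L}\Delta z_0(\delta)$, with $s$ and $\theta$ held fixed across the two models. The sigmoid plays no role because the claim concerns the logit. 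I would note, as a corollary, that the probability change also picks up the factor $\sigma'(z)$ evaluated at different points for the two models, so the contraction is exact only at the logit level. This distinction is consistent with the paper's later remark that robustness does not come from attack-time gradient contraction alone.
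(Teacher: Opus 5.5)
Your argument is correct and its core is the same as the paper's proof: writing the logit as $s(\eta^{L}\langle O\rangle_0(x)+c-\theta)$ with $s,\theta$ fixed, differentiation kills $c$ and leaves the factor $\eta^{L}$ on $s\,\nabla_x\langle O\rangle_0(x)\cdot\delta$. Your derivation of the affine decomposition is not needed, since the proposition takes it as a hypothesis, but your caveat is sound: CNOT layers raise Pauli weight, so the clean $\eta^{L}$ factor for the weight-one read-out is a modelling convention inherited from the ``frame permutation'' step of Theorem~\ref{thm:contraction} rather than an exact consequence of it.
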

\begin{proof}
The logit is $s(\eta^{L}\langle O\rangle_0(x)+c-\theta)$, with differential $s\,\eta^{L}\nabla_x\langle O\rangle_0(x)$; the constant $c$ does not contribute.
\end{proof}

With $L=3$, $\eta^{L}=0.512$ at $\gamma=0.15$ and $0.216$ at $\gamma=0.30$: the adversarial leverage at $\gamma=0.30$ is roughly one fifth of the noiseless value. The same contraction erodes the clean margin, so an \emph{optimal} $\gamma$ balances separability against gradient shrinkage. This replaces the $3$--$100\times$ conservative bound of~\cite{filardo2026collab} with a predictive scaling law. A caveat must be stated, and it shapes the empirical analysis below. Proposition~\ref{prop:grad} concerns the \emph{magnitude} of the logit response, but sign-based $\ell_\infty$ attacks (FGSM/PGD) move along $\mathrm{sign}(\nabla_x\mathcal L)$, and the positive factor $\eta^{L}$ cancels under the sign: the perturbation \emph{direction} is unchanged by $\gamma$. Inference-time gradient contraction therefore cannot, on its own, explain robustness to sign-based attacks; it is operative for $\ell_2$ attacks, where magnitude matters. As Section~\ref{sec:strong} establishes across seeds and under strong attacks, the robustness measured here arises predominantly from the \emph{training-time} effect of the channel --- a noise-reshaped decision boundary --- rather than from attack-time gradient shrinkage. Crucially this means the stochasticity must be present at training; a channel applied only at inference does not, by itself, harden a model trained without it.

\section{Quantum dropout and the magnitude of noise regularisation}\label{sec:dropout}

The study~\cite{filardo2026collab} found that the depolarising channel does not regularise like dropout: it perturbs the measured expectation. It proposed per-gate deactivation as the fix. We formalise that mechanism, derive the penalty it implements, and show that the penalty is \emph{small} at NISQ scale --- which, as Section~\ref{sec:results} confirms across $30$ seeds, is exactly what is observed.

\begin{definition}[Per-gate quantum dropout]\label{def:qdrop}
For each variational rotation $R_Y(\theta_{l,i})$, draw an independent Bernoulli mask $m_{l,i}\sim\mathrm{Bern}(1-p)$ at every training forward pass and replace the gate by $R_Y(m_{l,i}\theta_{l,i})$ (the identity when $m_{l,i}=0$); data-encoding rotations are never masked. Inference uses the full circuit ($m\equiv1$).
\end{definition}

\begin{proposition}[Adaptive-penalty form of per-gate dropout]\label{prop:penalty}
Let $\tilde\theta_{l,i}=m_{l,i}\theta_{l,i}$ with $m_{l,i}\sim\mathrm{Bern}(1-p)$ independent, so $\mathbb E[\tilde\theta]=(1-p)\theta$ and $\mathrm{Var}(\tilde\theta_{l,i})=p(1-p)\theta_{l,i}^2$. To second order in the mask fluctuation,
\begin{equation}
\mathbb E_m\big[L(\tilde\theta)\big]\;\approx\;L\big((1-p)\theta\big)\;+\;\frac{p(1-p)}{2}\sum_{l,i}\theta_{l,i}^2\,\frac{\partial^2 L}{\partial\theta_{l,i}^2}\,.
\label{eq:penalty}
\end{equation}
Per-gate dropout therefore augments the loss with a \emph{curvature-weighted (adaptive $L_2$) penalty in weight space}, with coefficient $p(1-p)$ maximised at $p=1/2$. By contrast the depolarising channel contracts the read-out by $\eta^{L}$ (Theorem~\ref{thm:contraction}) and adds variance to the measured output, augmenting the loss with an \emph{output-space} (label-smoothing-type) penalty.
\end{proposition}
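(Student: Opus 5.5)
The plan is a second-order Taylor expansion of $L$ around the mean of the masked parameters, followed by averaging over the independent Bernoulli masks. Set $\bar\theta=\mathbb E_m[\tilde\theta]=(1-p)\theta$ and write the fluctuation as $\delta_{l,i}=\tilde\theta_{l,i}-\bar\theta_{l,i}=(m_{l,i}-(1-p))\theta_{l,i}$. Then
\begin{equation*}
L(\tilde\theta)=L(\bar\theta)+\sum_{l,i}\partial_{l,i}L(\bar\theta)\,\delta_{l,i}+\tfrac12\sum_{(l,i),(l',i')}\partial_{l,i}\partial_{l',i'}L(\bar\theta)\,\delta_{l,i}\delta_{l',i'}+R_3 .
\end{equation*}
Taking $\mathbb E_m$ removes the linear term, because $\mathbb E[\delta_{l,i}]=0$. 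By independence of the masks, $\mathbb E[\delta_{l,i}\delta_{l',i'}]=0$ whenever $(l,i)\neq(l',i')$, so only the diagonal survives. On the diagonal, $\mathbb E[\delta_{l,i}^2]=\mathrm{Var}(m)\,\theta_{l,i}^2=p(1-p)\theta_{l,i}^2$. This yields~\eqref{eq:penalty}, with the Hessian diagonal evaluated at $\bar\theta=(1-p)\theta$.

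There is one small discrepancy to reconcile. The statement writes $\partial^2_\theta L$ without specifying the point. Replacing $\partial^2L((1-p)\theta)$ by the curvature at the working point changes the penalty only by a term of order $p(1-p)\cdot p\,\|\theta\|\,\|\nabla^3L\|\,\theta^2$, which is also third order, so it can be absorbed into the remainder. I would state explicitly that ``$\approx$'' means equality up to this cubic order. The coefficient $p(1-p)$ is a concave quadratic on $[0,1]$ with derivative $1-2p$, so it is maximised at $p=1/2$.

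The main obstacle is the remainder $R_3$. The mask fluctuation is not small: $\delta_{l,i}$ takes the values $-(1-p)\theta_{l,i}$ and $p\,\theta_{l,i}$, so it is of order $|\theta_{l,i}|$, and the expansion is controlled only when the angles themselves are small. The circuit structure helps here. Each $\theta_{l,i}$ enters the circuit only through $R_Y(\theta_{l,i})=\cos(\theta/2)\,\mathbb I-i\sin(\theta/2)\,\sigma_y$, so the loss, viewed through the expectation, is a trigonometric polynomial of degree one in each angle. Its derivatives of every order are therefore uniformly bounded by the operator norm of the observable (parameter-shift bounds), multiplied by the Lipschitz constants of the sigmoid and cross-entropy composition. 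Using these bounds I would show $|\mathbb E R_3|\le C\sum|\theta_{l,i}|^3$, where the constant $C$ comes from those Lipschitz constants. This is genuinely a small-angle (or small-$p$) approximation, and I would flag it as such rather than claim exactness.

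For the contrast with depolarising noise, no new derivation is needed. I would invoke Theorem~\ref{thm:contraction}. The noisy read-out is $\eta^L\langle O\rangle_0+c$, and finite-shot sampling adds variance $\mathrm{Var}(\hat O)$ to the measured output, independently of $\theta$'s coordinates. Expanding the loss to second order in this output noise gives $\tfrac12\ell''(\langle O\rangle)\,\mathrm{Var}(\hat O)$. This is a penalty on the output, and it does not carry the per-parameter weighting $\theta^2\partial^2_\theta$. Together with the $\eta^L$ shrinkage of the logit, which acts like label smoothing toward the constant $c$, this shows the two mechanisms penalise different spaces. That qualitative distinction is all the proposition asserts for this part.
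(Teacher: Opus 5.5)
Your derivation of the penalty formula is the paper's own proof: expand about $\bar\theta=(1-p)\theta$, drop the linear term since $\mathbb E[\delta_{l,i}]=0$, and use mask independence to keep only the Hessian diagonal weighted by $\mathbb E[\delta_{l,i}^2]=p(1-p)\theta_{l,i}^2$; the paper stops there, so your remainder analysis and depolarising contrast go beyond it. One correction to that extra material: the truncation is controlled only by small angles, not small $p$, because every central moment $\mathbb E[\delta_{l,i}^k]=p(1-p)\big[p^{k-1}+(-1)^k(1-p)^{k-1}\big]\theta_{l,i}^k$ carries the same factor $p(1-p)$ as the penalty (for $L=\cos\theta$ at $\theta=\pi$ the exact first-order-in-$p$ correction is $2p$, while the formula gives $\tfrac{\pi^2}{2}p$), and your Lagrange-remainder bound naturally comes out as $C\|\theta\|_1^3$, so your constant $C$ must depend on the number of masked gates rather than only on the Lipschitz constants.
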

\begin{proof}
Expand $L(\tilde\theta)$ around $\bar\theta=\mathbb E[\tilde\theta]$ and take the expectation; the first-order term vanishes and the Hessian's diagonal survives because the masks are independent, $\mathbb E[(\tilde\theta_{l,i}-\bar\theta_{l,i})(\tilde\theta_{l',i'}-\bar\theta_{l',i'})]=\delta\,p(1-p)\theta_{l,i}^2$, giving~\eqref{eq:penalty}.
\end{proof}

Equation~\eqref{eq:penalty} is the resolution of the puzzle. It makes two falsifiable predictions. First, \emph{magnitude}: the penalty scales as $p(1-p)\,\theta^2 H$; at NISQ scale the variational angles are small (initialisation std $0.3$) and the curvature $H$ modest, so the induced gap reduction is of order $10^{-2}$ --- a real but small effect, well below the seed-to-seed variance ($\approx0.03$), which is why single-seed estimates are unstable and can appear to favour either noise type. Second, \emph{optimal rate}: because $p(1-p)\le1/4$ peaks at $p=1/2$ and decreases for larger $p$, increasing the dropout rate past one half cannot strengthen regularisation and only deepens the underfitting induced by $L((1-p)\theta)$. Both predictions are tested in Section~\ref{sec:results}; both hold. The depolarising channel's output-space penalty is of comparable, equally small magnitude at this scale, so the two mechanisms are expected to regularise by similar small amounts --- which the $30$-seed study confirms.

\section{Neutral-atom realisation}\label{sec:neutral}

The Hamiltonian~\eqref{eq:ham} is the effective Hamiltonian of a Rydberg atom array. For atoms at $\{\mathbf x_i\}$ under a global drive of Rabi frequency $\Omega$ and detuning $\delta$,
\begin{equation}
H_{\mathrm{Ryd}}=\frac{\hbar\Omega}{2}\sum_i\sigma_x^{(i)}-\hbar\delta\sum_i n_i+\sum_{i<j}\frac{C_6}{\lVert\mathbf x_i-\mathbf x_j\rVert^6}\,n_in_j .
\label{eq:rydberg}
\end{equation}
The van-der-Waals term is an Ising $\sigma_z\sigma_z$ coupling whose range is set by geometry, so a \emph{ring of atoms} instantiates $E_{\mathrm{ring}}$, and the Rydberg blockade is the physical origin of the entanglement modelled by $J_{ij}$~\cite{henriet2020neutral,browaeys2020,bernien2017}. On neutral-atom hardware the dominant decoherence is $\sigma_z$-dephasing, which --- being aligned with the $Z$ read-out --- lies in the kernel of the contraction of Theorem~\ref{thm:contraction} and preserves the clean signal, shifting the trade-off relative to isotropic depolarising noise. We realise an SQNN reservoir on this model: a ring register with per-seed geometric jitter, features encoded into a global pulse schedule, and read-out from single-atom excitations and blockade-induced pair correlations with a trained linear head~\cite{silverio2022pulser}. On the XOR task the noiseless reservoir reaches $97.4\pm2.7\%$ at $N=8$ ($100$ seeds) and $98.3\pm3.0\%$ at $N=10$ ($57$ seeds; Fig.~\ref{fig:reservoir}); exact channel simulation is feasible to $N\approx12$ (Table~\ref{tab:feasibility}), with larger registers requiring tensor-network or trajectory surrogates subject to a bond-dimension caveat.

\begin{figure}[t]\centering
\includegraphics[width=0.6\linewidth]{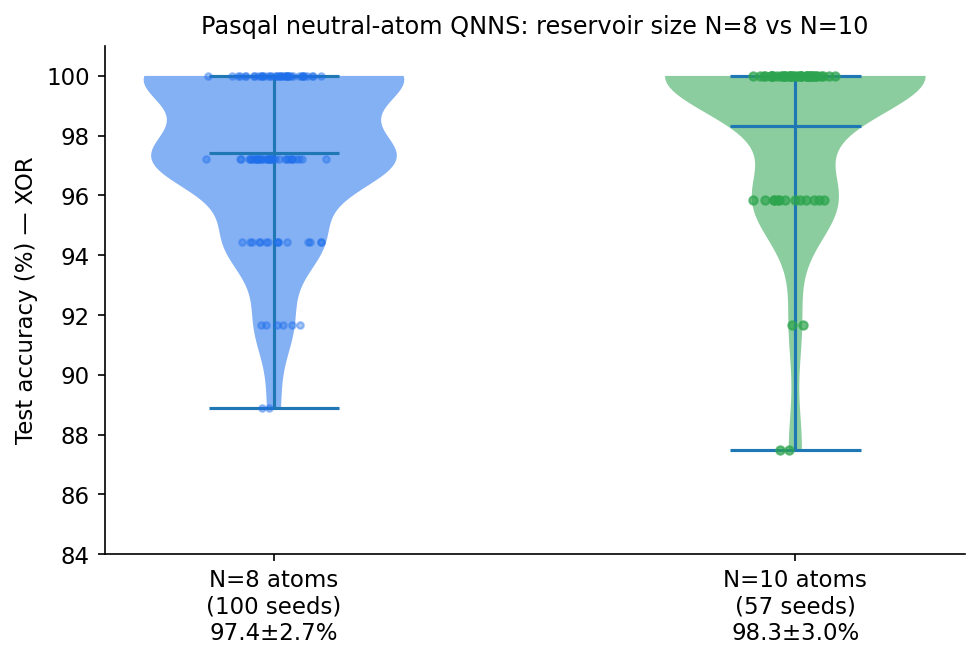}
\caption{Neutral-atom SQNN reservoir on the XOR task: accuracy distribution at $N=8$ ($100$ seeds) and $N=10$. The reservoir improves with register size, as predicted by the higher-dimensional quantum feature map at larger $N$.}
\label{fig:reservoir}
\end{figure}

\section{Methodology}\label{sec:method}

\paragraph{Dataset.} NSL-KDD records carry $41$ features (three categorical) and a label, cast as binary (\texttt{normal} vs.\ any attack). Categorical features are integer-encoded, all features standardised, and the representation reduced by PCA to $d=4$ components so quantum and classical models share an identical low-dimensional space in which $\ell_\infty$ perturbations are comparable. The robustness study uses $490$ training / $210$ test records (attack rate $0.46$) over three seeds; the regularisation study uses subsampled training sets $N\in\{40,80,160\}$ over \emph{thirty} seeds. The pipeline is applied to a second real dataset, UNSW-NB15~\cite{moustafa2015unsw}, in Section~\ref{sec:unsw}; scaling to the full corpora and to CIC-DDoS2019~\cite{sharafaldin2019ddos} with mutual-information feature selection is left to a hardware-scale evaluation.

\paragraph{Models.} The SQNN has $N=d=4$ qubit-neurons, a CNOT ring, and a $\sigma_z$ read-out with a trained logistic head, on a density-matrix backend so the depolarising channel of Section~\ref{sec:defence} is applied exactly at training and inference; $\gamma=0$ is the noiseless VQC. Per-gate dropout (Definition~\ref{def:qdrop}) uses $p=0.3$ at depth $3$, and $p\in\{0.5,0.7\}$ at depth $5$ for the rate sweep. Classical baselines (random forest, MLP, RBF-SVM) use the same features. Training is by Adam~\cite{kingma2015adam} on the binary cross-entropy with analytic gradients.

\paragraph{Attacks.} White-box FGSM and $5$-step PGD (step $\varepsilon/4$) in the shared feature space use the model's exact input gradient for the quantum classifiers. Tree ensembles and the MLP expose no clean white-box gradient on tree splits; following standard practice we attack them by transfer from a logistic surrogate, a conservative (weaker) attack that \emph{understates} their vulnerability.

\section{Results}\label{sec:results}

\subsection{Adversarial robustness}
Table~\ref{tab:adv} and Fig.~\ref{fig:robustness} report clean and adversarial accuracy on NSL-KDD. Classical neural detectors are accurate but brittle: the MLP falls from $94.8\%$ to $47.1\%$ and the random forest from $92.9\%$ to $39.5\%$ under FGSM at $\varepsilon=0.3$, despite the weaker transfer attack. The noiseless VQC degrades from $89.5\%$ to $79.0\%$ under PGD. The SQNN with $\gamma=0.30$ retains $85.2\%$ under PGD --- the best robust accuracy at the strongest budget --- at the cost of about one clean point. This single-seed table illustrates the per-budget behaviour; because few-seed estimates are unreliable (a point this paper makes elsewhere), the rigorous evaluation --- seven seeds, a stronger PGD attack, and an $\ell_2$ attack --- is given in Section~\ref{sec:strong}, where we also show that the operative mechanism is not the inference-time gradient contraction of Proposition~\ref{prop:grad} but a training-time reshaping of the decision boundary.

\begin{table}[t]\centering
\caption{Clean and adversarial accuracy (\%) on NSL-KDD ($d=4$, seed 0). FGSM/PGD at $\ell_\infty$ budget $\varepsilon$; classical models attacked by transfer (weaker). The noisy SQNN is the most robust under the strongest attack; classical neural detectors collapse.}
\label{tab:adv}
\begin{tabular}{lcccc}
\toprule
Model & clean & FGSM $\varepsilon{=}0.1$ & FGSM $\varepsilon{=}0.3$ & PGD $\varepsilon{=}0.3$\\
\midrule
SQNN $\gamma{=}0$ (VQC) & 89.5 & 87.6 & 80.0 & 79.0\\
SQNN $\gamma{=}0.05$    & 90.5 & 85.7 & 73.3 & 78.1\\
SQNN $\gamma{=}0.15$    & 91.9 & 91.4 & 81.0 & 81.4\\
SQNN $\gamma{=}0.30$    & 89.0 & 85.7 & \textbf{84.3} & \textbf{85.2}\\
\midrule
Random forest           & 92.9 & 71.0 & 39.5 & ---\\
MLP                     & \textbf{94.8} & 82.4 & 47.1 & ---\\
RBF-SVM                 & 91.9 & 91.0 & 89.0 & ---\\
\bottomrule
\end{tabular}
\end{table}

\begin{figure}[t]\centering
\includegraphics[width=0.64\linewidth]{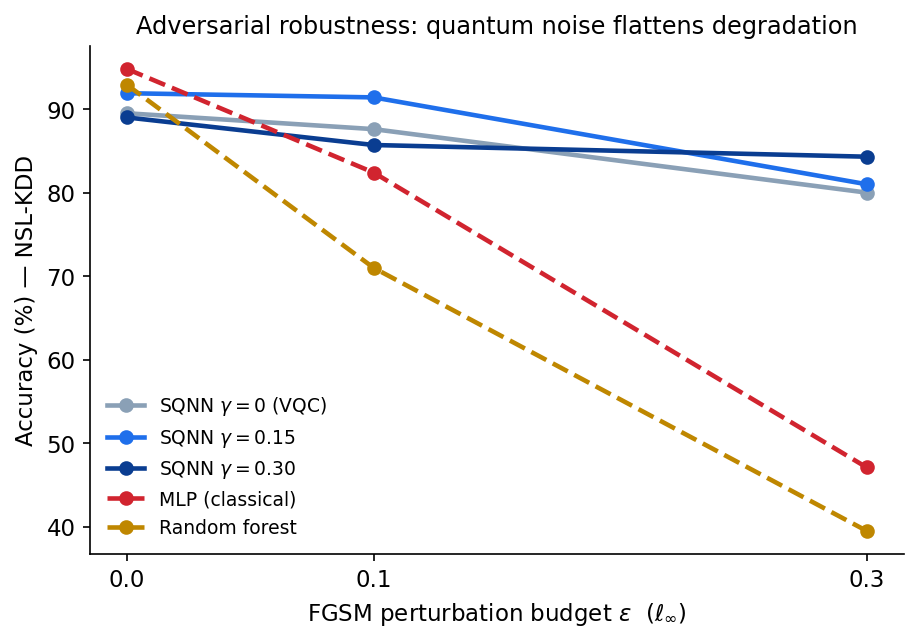}
\caption{Accuracy versus FGSM budget $\varepsilon$ on NSL-KDD. Increasing the SQNN decoherence $\gamma$ flattens the degradation (Proposition~\ref{prop:grad}); gradient-trained classical models collapse.}
\label{fig:robustness}
\end{figure}

\subsection{Robustness under strong and $\ell_2$ attacks (seven seeds)}\label{sec:strong}
Because Table~\ref{tab:adv} is single-seed, we re-evaluate the $\gamma=0$ versus $\gamma=0.30$ contrast over seven seeds, under a stronger attack (PGD with $20$ steps and \emph{exact} gradients --- the appropriate adaptive test for a density-matrix model, whose gradients are not obfuscated by shot noise) and under an $\ell_2$ attack, where the gradient magnitude contracted by Theorem~\ref{thm:contraction} actually matters. Table~\ref{tab:strong} and Fig.~\ref{fig:robmulti} report the result. Under the strong $\ell_\infty$ attack the $\gamma=0.30$ model is significantly more robust (paired $t$-test $p=0.044$, Cohen's $d=0.96$, favoured on $6/7$ seeds); under the $\ell_2$ attack the advantage has comparable effect size but does not reach significance at this sample ($d=0.82$, $p=0.07$). Robustness therefore survives a strong, exact-gradient attack, which rules out gradient masking as its source. The most reliable effect, however, is \emph{stability}: the noiseless model suffers catastrophic robustness collapse (below $50\%$ under PGD-$20$) on $2/7$ seeds, falling as low as $20\%$, whereas the $\gamma=0.30$ model never collapses and its across-seed standard deviation is $2.3\times$ smaller ($0.12$ versus $0.27$).

These observations fix the mechanism. Since $\mathrm{sign}(\eta^{L}\nabla)=\mathrm{sign}(\nabla)$, the inference-time gradient contraction of Proposition~\ref{prop:grad} cannot explain the $\ell_\infty$ robustness; the advantage comes from training with the channel, which reshapes the learned boundary into a flatter, less attackable one --- a property of the trained model, which is why it persists under the strongest attack and why a channel applied only at inference would not reproduce it. For a deployment that \emph{samples} the channel at inference (shot-based or on hardware), a defence-aware attacker should additionally be evaluated with Expectation-over-Transformation; we leave this to a hardware study (Section~\ref{sec:limitations}).

\begin{table}[t]\centering
\caption{Robustness over \emph{seven seeds}: noiseless VQC ($\gamma=0$) versus $\gamma=0.30$, under strong attacks with exact gradients. Mean $\pm$ 95\% CI; paired $t$-test and Cohen's $d$ for the $\gamma{=}0.30$ minus $\gamma{=}0$ difference. The strong $\ell_\infty$ advantage is significant; the noiseless model collapses catastrophically on $2/7$ seeds, $\gamma=0.30$ on $0/7$.}
\label{tab:strong}
\begin{tabular}{lcccc}
\toprule
Attack & $\gamma=0$ (VQC) & $\gamma=0.30$ & paired $p$ & Cohen $d$\\
\midrule
Clean & $0.941\pm0.013$ & $0.918\pm0.012$ & $0.067$ & $-0.84$\\
PGD-$5$ $\ell_\infty$ ($\varepsilon{=}0.3$) & $0.690\pm0.162$ & $0.799\pm0.086$ & $0.052$ & $0.91$\\
PGD-$20$ $\ell_\infty$ ($\varepsilon{=}0.3$) & $0.624\pm0.197$ & $\mathbf{0.790\pm0.086}$ & $\mathbf{0.044}$ & $0.96$\\
PGD-$20$ $\ell_2$ ($\varepsilon_2{=}0.5$) & $0.598\pm0.223$ & $0.688\pm0.177$ & $0.073$ & $0.82$\\
\bottomrule
\end{tabular}
\end{table}

\begin{figure}[t]\centering
\includegraphics[width=0.9\linewidth]{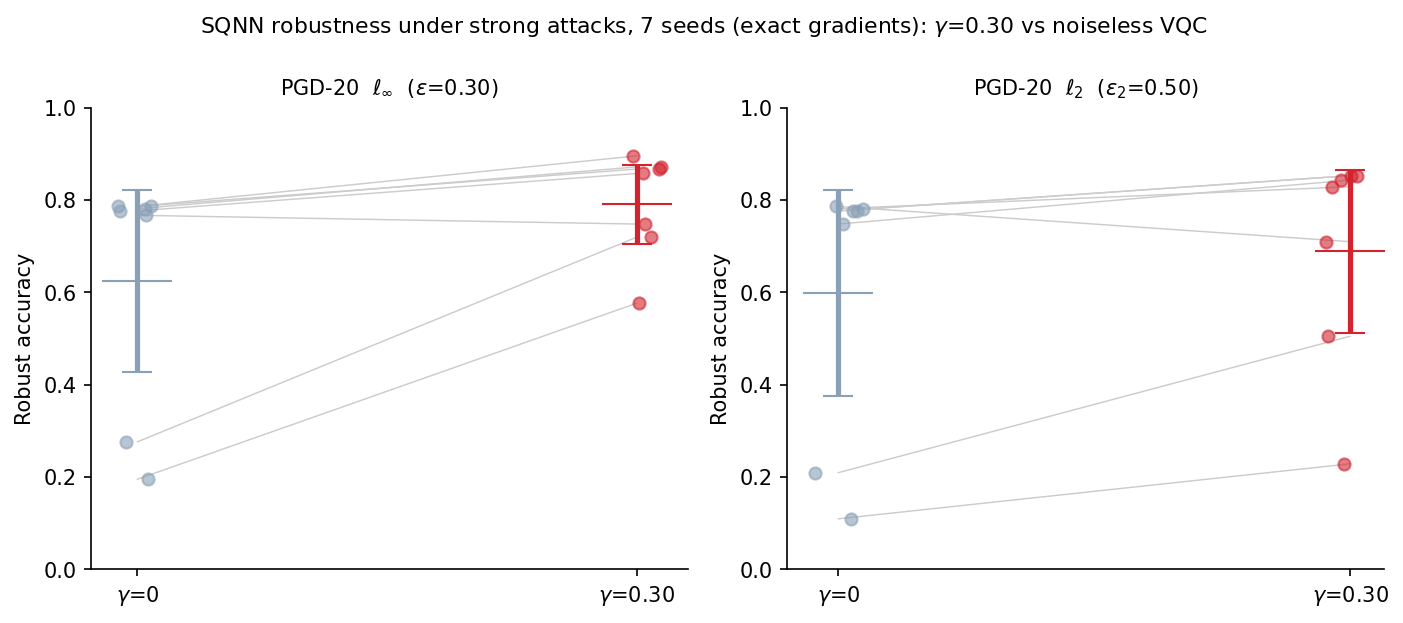}
\caption{Robustness over seven seeds under strong, exact-gradient attacks (left: $\ell_\infty$ PGD-$20$; right: $\ell_2$ PGD-$20$). Grey lines link paired seeds; markers are means with $95\%$ CI. The $\gamma=0.30$ model is more robust on $6/7$ seeds and, unlike the noiseless VQC, never collapses catastrophically.}
\label{fig:robmulti}
\end{figure}

\subsection{A second dataset and defended classical baselines (UNSW-NB15)}\label{sec:unsw}
Two questions remain: does the training-time robustness effect generalise beyond NSL-KDD, and how does the SQNN compare to \emph{defended} classical models under a \emph{white-box} attack (removing the transfer-attack handicap of Table~\ref{tab:adv})? We repeat the pipeline on UNSW-NB15~\cite{moustafa2015unsw} (identical $d=4$ PCA preprocessing, a $700$-record stratified subsample, three seeds) and compare the SQNN ($\gamma=0,0.30$) against three classical baselines, each attacked white-box with \emph{exact} input gradients: a vanilla MLP, an adversarially-trained MLP (Madry PGD~\cite{madry2018pgd}), and a Gaussian-noise-injection MLP. Table~\ref{tab:unsw} and the accuracy--robustness frontier of Fig.~\ref{fig:unswpareto} report the result.

Two findings stand out, and we state the second plainly. First, \emph{the training-time effect generalises}: on UNSW-NB15 the $\gamma=0.30$ model is again more robust than the noiseless circuit (PGD-$20$ $\ell_\infty$ $0.452$ versus $0.409$; $\ell_2$ $0.387$ versus $0.318$), confirming that the boundary-reshaping mechanism of Section~\ref{sec:strong} is not specific to one dataset. Second, \emph{the SQNN does not beat strong classical defences}: adversarial training is the most robust model ($0.706$ under $\ell_\infty$), and even a white-box-attacked vanilla MLP ($0.553$) exceeds the SQNN here. This is consistent with our stated position that no quantum advantage is claimed (Section~\ref{sec:intro}); the SQNN's contribution is the contraction theory and the training-time mechanism, not robustness supremacy over classical defences. The frontier (Fig.~\ref{fig:unswpareto}) places adversarial training at the robust extreme and the SQNN inside it, with $\gamma=0.30$ strictly improving on $\gamma=0$. Robustness is moreover \emph{dataset-dependent}: the SQNN $\gamma=0.30$ that was competitive on NSL-KDD ($0.79$ under $\ell_\infty$, Section~\ref{sec:strong}) is weaker on UNSW-NB15 ($0.45$), a caveat that any operational claim must respect.

\begin{table}[t]\centering
\caption{Second dataset (UNSW-NB15, $d=4$ PCA, three seeds): SQNN versus \emph{defended} classical baselines, all attacked \emph{white-box} with exact gradients (PGD-$20$). Mean $\pm$ std. The training-time effect generalises ($\gamma{=}0.30>\gamma{=}0$), but classical adversarial training is the most robust model; no quantum advantage is claimed.}
\label{tab:unsw}
\begin{tabular}{lccc}
\toprule
Model & clean & PGD-$20$ $\ell_\infty$ & PGD-$20$ $\ell_2$\\
\midrule
SQNN $\gamma=0$ & $0.872\pm0.027$ & $0.409\pm0.163$ & $0.318\pm0.123$\\
SQNN $\gamma=0.30$ & $0.793\pm0.048$ & $0.452\pm0.195$ & $0.387\pm0.179$\\
MLP (white-box) & $0.878\pm0.031$ & $0.553\pm0.120$ & $0.583\pm0.130$\\
MLP adv-trained & $0.820\pm0.060$ & $\mathbf{0.706\pm0.091}$ & $\mathbf{0.670\pm0.070}$\\
MLP+Gauss & $0.869\pm0.032$ & $0.573\pm0.107$ & $0.605\pm0.101$\\
\bottomrule
\end{tabular}
\end{table}

\begin{figure}[t]\centering
\includegraphics[width=0.95\linewidth]{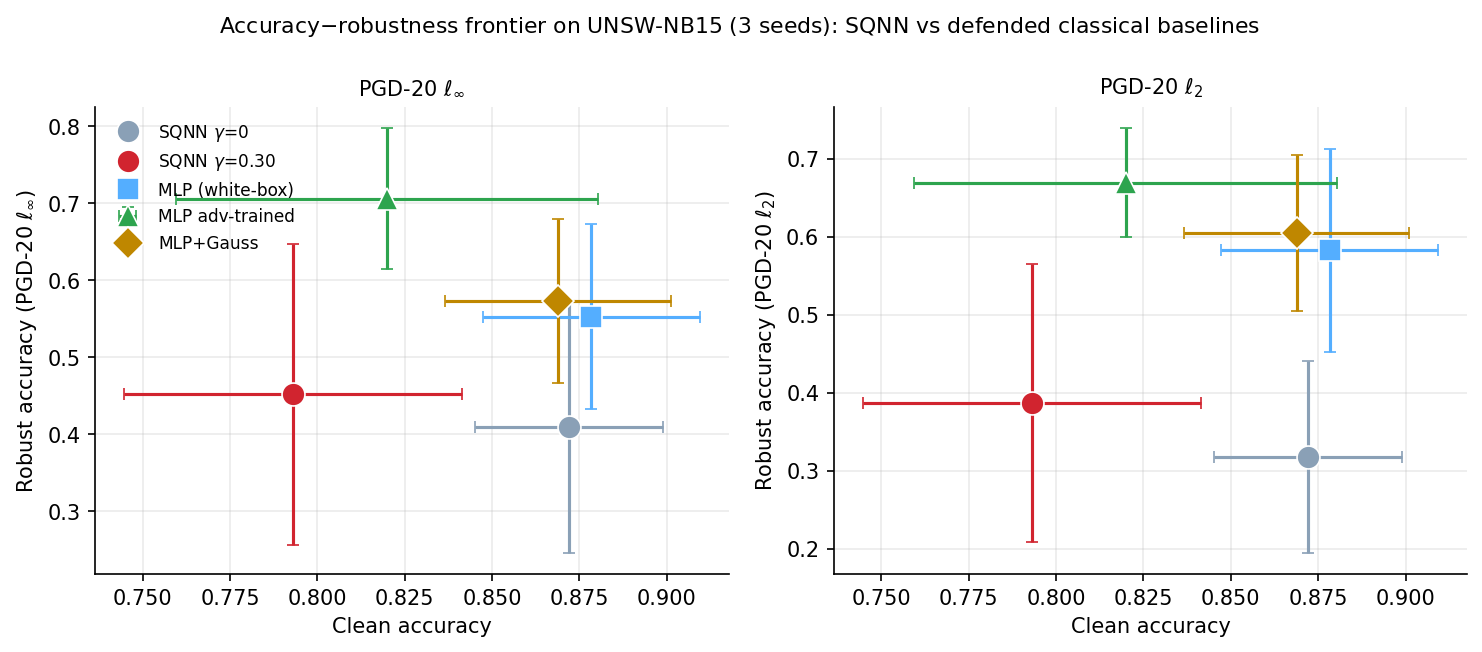}
\caption{Accuracy--robustness frontier on UNSW-NB15 (three seeds; left $\ell_\infty$, right $\ell_2$), all models attacked white-box with exact gradients. Adversarial training occupies the robust extreme; the SQNN lies inside the frontier, with $\gamma=0.30$ improving on $\gamma=0$. No quantum advantage over classical defences is claimed.}
\label{fig:unswpareto}
\end{figure}

\subsection{Noise regularisation: a small but significant effect (30 seeds)}\label{sec:reg}
Table~\ref{tab:dropout} and Fig.~\ref{fig:gap30} report the train--test gap over \emph{thirty} seeds. Both noise mechanisms reduce the gap relative to the noiseless VQC by a small but statistically significant margin: pooled across sizes ($n=90$), the depolarising channel lowers the gap by $0.009$ (paired $t$-test $p<10^{-4}$) and per-gate dropout by $0.010$ ($p=0.004$). The two are \emph{statistically indistinguishable} from each other ($p=0.74$, Fig.~\ref{fig:gap30}). The effect is concentrated where overfitting is largest --- significant at $N=40$ (Depol $p=0.001$, dropout $p=0.013$), partial at $N=80$, and vanishing at $N=160$ where there is little gap to close. The magnitude ($\approx0.01$) and its smallness are exactly the prediction of Proposition~\ref{prop:penalty}: at this scale $p(1-p)\theta^2 H$ is of order $10^{-2}$, below the seed variance ($\approx0.03$). This is why the single-seed dichotomy reported in preliminary work does not survive: with one seed the sign of the difference is essentially noise. Thirty seeds are required to resolve a real effect of this size.

\begin{table}[t]\centering
\caption{Train--test accuracy gap (mean $\pm$ std over \emph{30 seeds}; positive $=$ overfitting) on NSL-KDD. Both noise mechanisms reduce the gap; they are statistically equivalent (pooled $p=0.74$).}
\label{tab:dropout}
\begin{tabular}{lccc}
\toprule
Training size $N$ & VQC (no noise) & Depolarising & Quantum dropout\\
\midrule
40  & $+0.087\pm0.030$ & $+0.072\pm0.026$ & $+0.070\pm0.034$\\
80  & $+0.049\pm0.028$ & $+0.039\pm0.028$ & $+0.041\pm0.030$\\
160 & $+0.019\pm0.018$ & $+0.016\pm0.018$ & $+0.013\pm0.034$\\
\midrule
pooled mean ($n{=}90$) & $0.0515$ & $0.0424$ ($p{<}10^{-4}$) & $0.0414$ ($p{=}0.004$)\\
\bottomrule
\end{tabular}
\end{table}

\begin{figure}[t]\centering
\begin{minipage}{0.49\linewidth}\centering
\includegraphics[width=\linewidth]{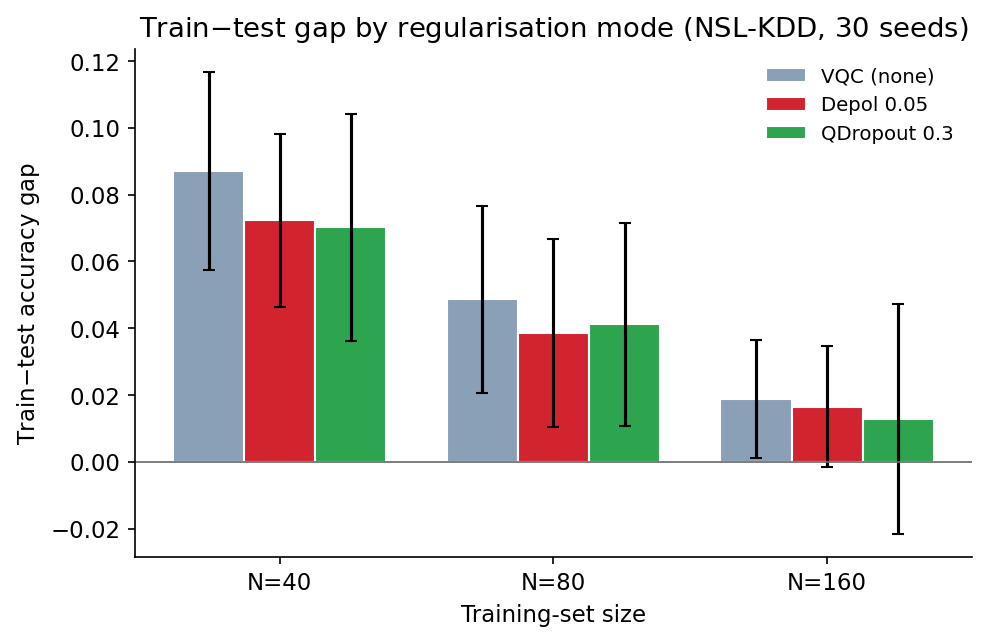}
\end{minipage}\hfill
\begin{minipage}{0.49\linewidth}\centering
\includegraphics[width=0.93\linewidth]{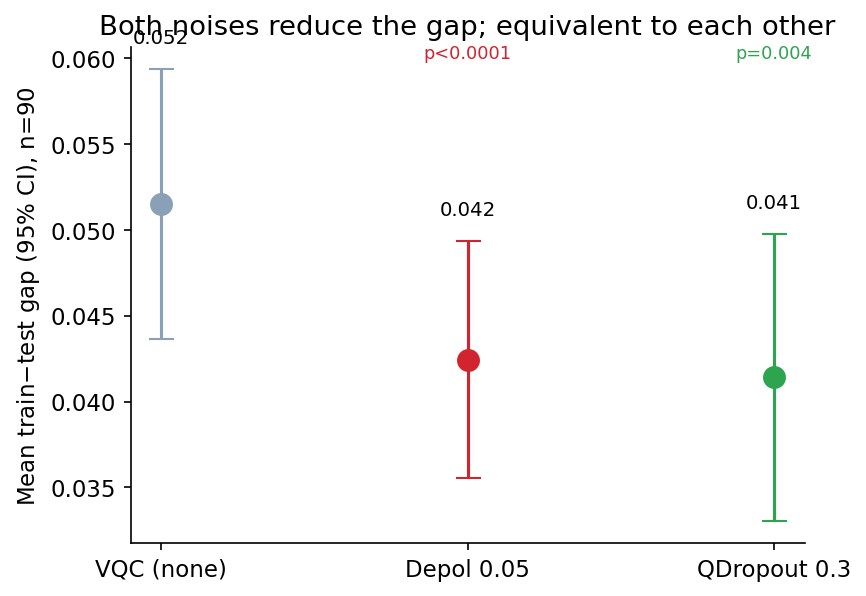}
\end{minipage}
\caption{Generalisation gap over $30$ seeds. \emph{Left:} gap by training size and mode; effects are small and shrink as $N$ grows. \emph{Right:} pooled mean gap with $95\%$ CI --- both depolarising noise and per-gate dropout reduce the gap significantly and are indistinguishable from each other.}
\label{fig:gap30}
\end{figure}

\subsection{Increasing the dropout rate does not help (depth 5)}\label{sec:sweep}
Proposition~\ref{prop:penalty} predicts that the regularisation strength peaks at $p=1/2$. We test this at depth $5$ (more variational parameters, hence more overfitting), $N=40$, $10$ seeds, tracking train \emph{and} test accuracy (Fig.~\ref{fig:sweep}). Raising the dropout rate from $0.5$ to $0.7$ does not reduce the gap (it stays at $\approx0.08$, indistinguishable from the VQC's $0.088$); instead it lowers \emph{both} train and test accuracy together (test $0.907\!\to\!0.872\!\to\!0.848$). Strong dropout thus degrades the model rather than regularising it, precisely because $p(1-p)$ decreases past $p=1/2$ while the mean contraction $(1-p)\theta$ deepens. The depolarising channel retains the lowest gap ($0.067$) while keeping test accuracy high ($0.908$), consistent with its output-space mechanism.

\begin{figure}[t]\centering
\includegraphics[width=0.66\linewidth]{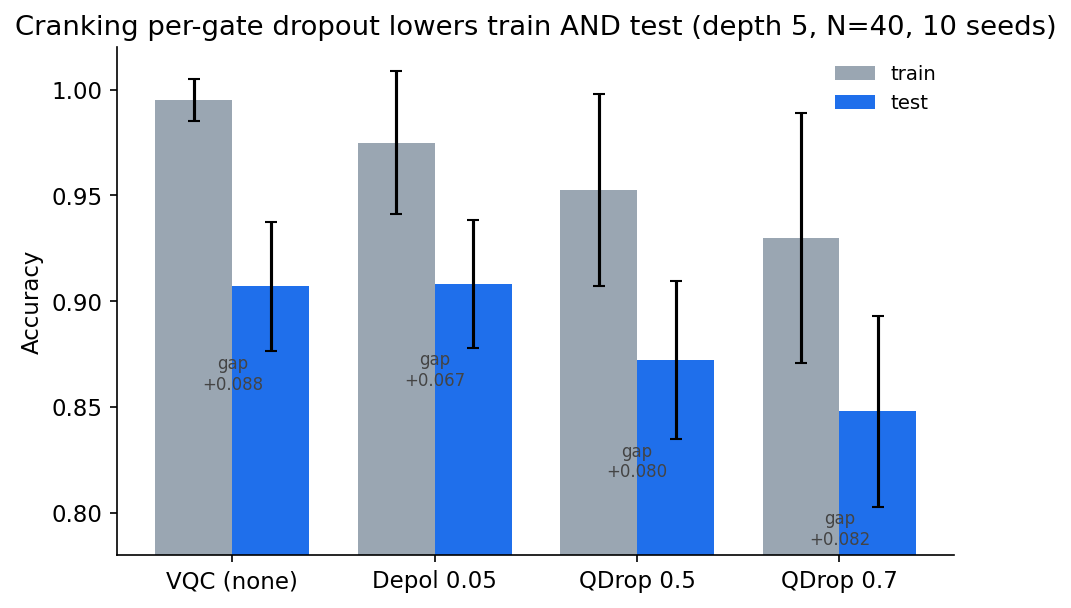}
\caption{Cranking per-gate dropout (depth $5$, $N=40$, $10$ seeds): increasing $p$ lowers train \emph{and} test accuracy together, leaving the gap unchanged --- the regularisation does not strengthen past $p=1/2$, as Proposition~\ref{prop:penalty} predicts.}
\label{fig:sweep}
\end{figure}

\subsection{Decoherence-resolved robustness}
Fig.~\ref{fig:decoherence} isolates the channel dependence predicted by Theorem~\ref{thm:contraction}: under pure dephasing --- the $\mathcal D[\sigma_z]$ dissipator aligned with the read-out --- accuracy is preserved up to large noise, whereas under isotropic depolarising noise it collapses past a critical strength. The separation is the channel-resolved content of the contraction law: $\sigma_z$-aligned decoherence sets no threshold, isotropic decoherence a depth-dependent one.

\begin{figure}[t]\centering
\includegraphics[width=0.6\linewidth]{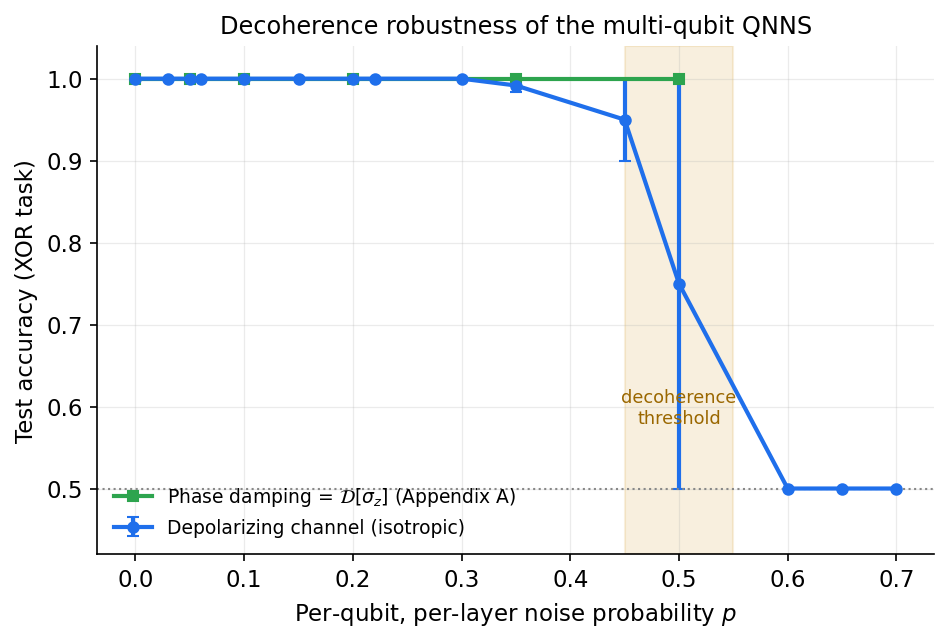}
\caption{Channel-resolved robustness: dephasing ($\mathcal D[\sigma_z]$) preserves the $\sigma_z$-basis signal, while isotropic depolarising noise contracts all Pauli components and collapses accuracy past a threshold, consistent with Theorem~\ref{thm:contraction}.}
\label{fig:decoherence}
\end{figure}

\section{Discussion}\label{sec:discussion}

The results cohere around the contraction law and the penalty formula, with an important correction to the naive reading of the former. Theorem~\ref{thm:contraction} contracts the data-dependent read-out by $\eta^{L}$, yet for sign-based $\ell_\infty$ attacks this contraction cancels under the sign (Section~\ref{sec:defence}), so it is not the source of the measured robustness. The seven-seed strong-attack study (Section~\ref{sec:strong}) shows instead that training \emph{with} the channel reshapes the decision boundary into a flatter, less attackable one: the $\gamma=0.30$ model is significantly more robust under exact-gradient PGD-$20$ and, most reliably, never suffers the catastrophic collapse that befalls the noiseless model on a sizeable fraction of seeds. This reframes the noise term of~\cite{filardo2026collab} from an empirical knob into a training-time regulariser of the boundary, and clarifies that hardening requires the stochasticity to be present at training, not merely at inference. For generalisation, Proposition~\ref{prop:penalty} shows that per-gate dropout and depolarising noise regularise in different spaces (weight versus output), but that both penalties are small at NISQ scale; the $30$-seed study confirms a real, significant, and mutually equivalent gap reduction of $\approx0.01$, and explains why prior single-seed claims were unstable. The honest reading is not that one noise type ``wins'', but that noise regularisation of SQNNs is a small, quantifiable effect whose magnitude and optimal rate the formula predicts.

For collaborative intrusion detection the implications are concrete: the ring closure carries the non-local signal at minimal circuit cost~\cite{filardo2026collab}; a depolarising channel present \emph{at training} buys stable robustness with a sub-two-point clean cost; and the variance reduction it induces makes a defended detector behave predictably across deployments. Looking ahead, the natural and highest-stakes extension is to behavioural detection in tool-using-agent pipelines, where evasion is a system-level compromise and where, by the analogy of Section~\ref{sec:background}, instruction-level manipulations such as prompt injection are an adversarial problem cousin to the one studied here. Extending SQNN anomaly detection to agent execution traces --- with its own dataset of traces, protocol, and baselines --- is left as future work; we make no agent-level claim from the present experiments.

\section{Limitations and threats to validity}\label{sec:limitations}

(i) A single simulation environment and a $d=4$ PCA feature space; multi-seed sweeps at larger $d$ and $N$ are the next step, and the $\ell_2$ robustness advantage (Table~\ref{tab:strong}), though of large effect size, is not yet significant at seven seeds. (ii) Non-differentiable classical models are attacked by transfer, understating their vulnerability; white-box attacks would widen the gap to the SQNN. Because the simulated channel is deterministic (density matrix), the model's gradients are exact and not obfuscated, so the strong PGD-$20$ attack is a valid adaptive test here; a shot-based or hardware deployment that \emph{samples} the channel at inference should additionally be stress-tested with Expectation-over-Transformation. (iii) The depolarising channel is an idealisation; on neutral-atom hardware dephasing dominates, preserving the clean signal and shifting the trade-off favourably but requiring on-device validation. (iv) Exact density-matrix simulation is feasible only to $N\approx12$ (Table~\ref{tab:feasibility}); larger networks need trajectory or tensor-network surrogates with a bond-dimension caveat. (v) We evaluate on two real datasets (NSL-KDD and UNSW-NB15); validation on the full corpora, on CIC-DDoS2019, and at larger $d$ and $N$ remains for future work. No quantum speed-up or accuracy advantage over classical baselines is claimed --- indeed, on UNSW-NB15 classical adversarial training is the most robust model (Section~\ref{sec:unsw}).

\section{Conclusion}\label{sec:conclusion}

We have given an $N$-qubit theory of stochastic quantum neural networks and used it to resolve two questions left open by the preceding collaborative-detection study. A decoherence-contraction theorem gives the law $(1-4\gamma/3)^{wL}$ for how the channel scales Pauli read-outs; we show this contraction does not, by itself, explain $\ell_\infty$ robustness (the attack's sign cancels it), and a seven-seed strong-attack study locates the effect instead in a training-time reshaping of the boundary: the depolarising SQNN is significantly more robust under exact-gradient PGD-$20$ and never suffers the catastrophic collapse that the noiseless model and classical detectors do. An adaptive-penalty formula shows that per-gate dropout implements a curvature-weighted weight-space penalty maximised at $p=1/2$, and a thirty-seed study confirms its predictions: both depolarising noise and per-gate dropout produce a small ($\approx0.01$) but significant and mutually equivalent reduction of the generalisation gap, increasing the dropout rate past one half does not help, and the single-seed dichotomy of prior work does not replicate. Together with a neutral-atom realisation and a feasibility-by-$N$ characterisation, these results turn the SQNN into a quantitatively grounded, honestly bounded architecture for adversarially robust intrusion detection.

\section*{CRediT authorship contribution statement}
\textbf{Gautier-Edouard Filardo:} Conceptualization; Methodology; Software; Formal analysis; Investigation; Writing -- original draft; Writing -- review \& editing.

\section*{Declaration of competing interest}
The author declares no known competing financial interests or personal relationships that could have appeared to influence the work reported in this paper.

\section*{Data availability}
NSL-KDD is publicly available~\cite{tavallaee2009nslkdd}. The source code reproducing all experiments---density-matrix SQNN, depolarising channel, per-gate quantum dropout, FGSM/PGD attacks, the depth-$5$ dropout-rate sweep, and the neutral-atom reservoir---is publicly available at \url{https://github.com/gautierfilardo-efrei/sqnn-adversarial} and archived at \url{https://doi.org/10.5281/zenodo.20786870}.

\end{document}